\newtheorem{theorem}{Theorem}
\newtheorem{proposition}[theorem]{Proposition}
\newtheorem{corollary}[theorem]{Corollary}
\newtheorem{assumption}[theorem]{Assumption}
\newtheorem{lemma}[theorem]{Lemma}
\title{\LARGE \bf
Adaptive Robust Controller for handling Unknown Uncertainty of Robotic Manipulators
}
\author{Mohamed Abdelwahab$^{1}$, Giulio Giacomuzzo$^{1}$, Alberto Dalla Libera$^{1}$, Ruggero Carli$^{1}$\thanks{$^{1}$Department of Information Engineering, Università di Padova, Italy \newline {\tt\small \{abdelwahab,giacomuzzo,dallaliber,carlirug\}\newline
@dei.unipd.it}}}
\begin{document}

\maketitle
\thispagestyle{empty}
\pagestyle{empty}

\begin{abstract}

The ability to achieve precise and smooth trajectory tracking is crucial for ensuring the successful execution of various tasks involving robotic manipulators. State-of-the-art techniques require accurate mathematical models of the robot dynamics, and robustness to model uncertainties is achieved by relying on precise bounds on the model mismatch. In this paper, we propose a novel adaptive robust feedback linearization scheme able to compensate for model uncertainties without any a-priori knowledge on them, and we provide a theoretical proof of convergence under mild assumptions. We evaluate the method on a simulated RR robot. First, we consider a nominal model with known model mismatch, which allows us to compare our strategy with state-of-the-art uncertainty-aware methods. Second, we implement the proposed control law in combination with a learned model, for which uncertainty bounds are not available. Results show that our method leads to performance comparable to uncertainty-aware methods while requiring less prior knowledge.  

\end{abstract}


\section{Introduction}

In modern industry, there are increasing demands for industrial robots in several fields, ranging from welding, spraying to assembly, handling, transportation and precise manufacturing, just to mention a few. The tasks that industrial robots are expected to accomplish are becoming increasingly complex, leading to higher requirements for robots and controllers. Therefore designing high accuracy control of industrial robots is still a challenging task.

In this paper we focus on the trajectory tracking control problem for industrial robots, that is, the design of controllers able to drive an industrial robot to track a desired trajectory within a pre-assigned tolerance.  In this domain, various approaches have been investigated, spanning from traditional methodologies to sophisticated techniques. These include PID control \cite{rocco1996stability}-\cite{soriano2020pd}, feedback linearization \cite{abbas2015feedback},\cite{kali2018optimal}, adaptive backstepping control \cite{hu2012adaptive}, discrete control \cite{danik2022symbolic}, adaptive control \cite{safaei2017adaptive},\cite{slotine1988adaptive}, \cite{tian2019adaptive}, robust control \cite{sage1999robust}, sliding mode control \cite{ISLAM20112444}, neural network control \cite{jin2018robot}, fuzzy control \cite{naik2015}, iterative learning control \cite{bouakrif2016trajectory} and reiforcement learning \cite{amadio2022model}.

Here we focus on feedback linearization control or inverse dynamics control, that has shown remarkable performance when an accurate model of the system is available. The feedback linearization scheme is composed by the cascade of two loops; the outer loop allows to eliminate the nonlinearities of the system reducing the dynamics to be controlled to a double integrator that can be stabilized within the inner loop by combining a feedforward term with a PD controller. Under the assumption of perfect knowledge of the dynamics of the systems, this scheme is shown to asymptotically attain perfect tracking with transient performance that can be regulated by proportional and derivate gains.

\begin{figure}[t]
    \centering
    \includegraphics[width=\columnwidth]{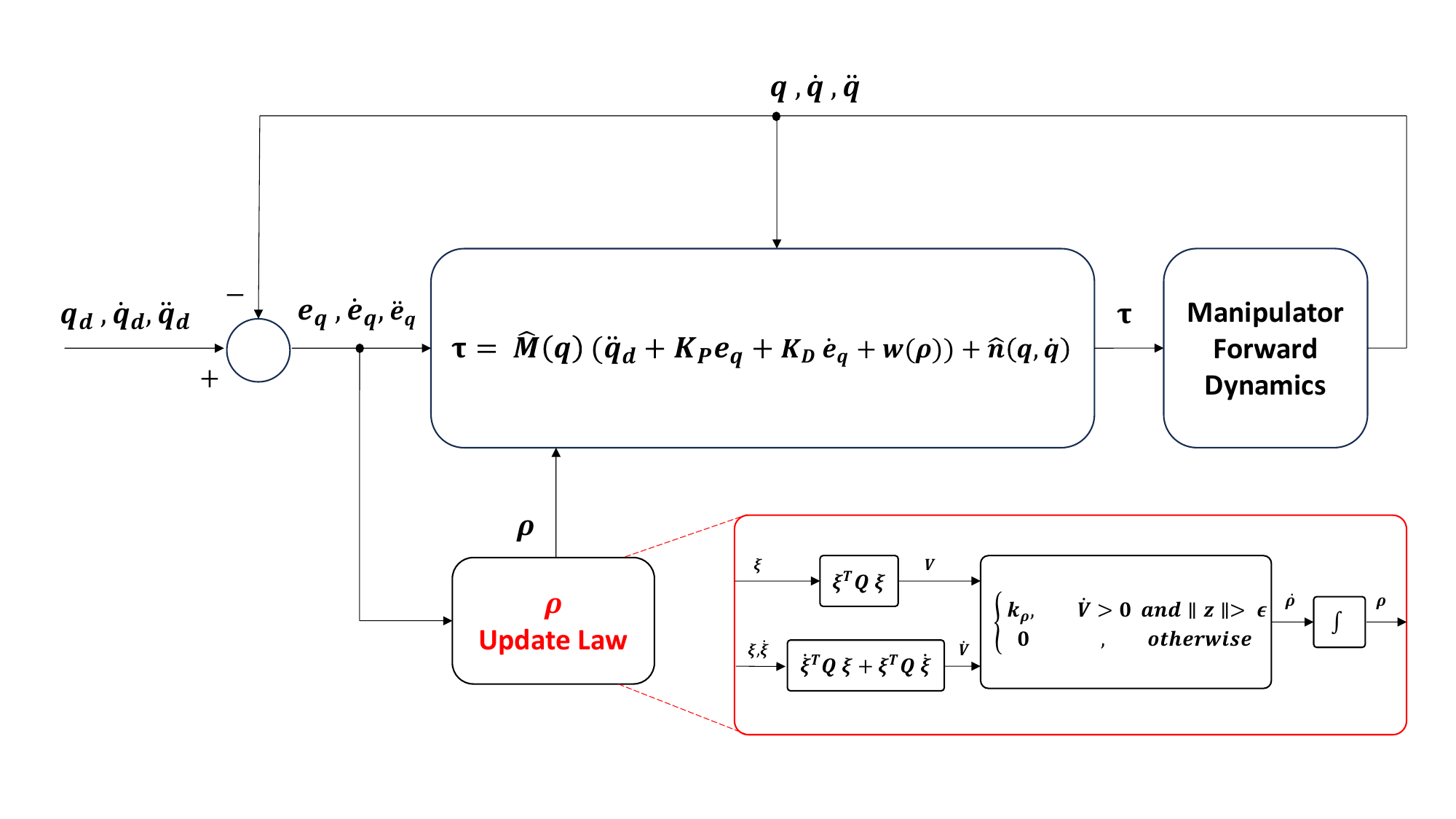}
    \caption{Controller scheme of the proposed Adaptive Robust Feedback Linearization (ARFBL)  }
    \label{fig:ARFBL scheme}
\end{figure}

In presence of model mismatch the above scheme can be robustified by introducing an additional term within the inner loop that can be designed resorting to the direct Lyapunov method, provided that bounds on the sizes of the model uncertainties are a-priori known, see section 8.5.3 of \cite{siciliano2010robotics}.
When these bounds are not available, the design of a robust feedback linearization scheme is still an active research area.

An interesting and important situation where the bounds are not available is in the context of black-box dynamics identification. Learning the robot dynamics directly from experimental data, in a black-box fashion, has gained an increasing interest over the recent years, due to the concurrent increase in robot complexity and developments in learning techniques. Black-box methods have shown promising results in approximating even complex, non-linear dynamics \cite{lutter2019deep, evangelisi-physically-consistent, ADL_GIP19,giacomuzzo2023black}, but it is particularly challenging to derive the uncertainty bounds on the dynamics components as required by the state-of-the-art robust feedback linearization approaches.     

The main contribution of this paper is to propose a novel adaptive robust feedback linearization scheme able to compensate for model uncertainties without any a-priori knowledge on them. 
The main idea is to include into the inner loop a compensating term whose size is modified according to the behavior of a Lyapunov function that would have a negative time derivative in all the points if no uncertainties were present. Loosely speaking, the rationale of the updating rule we propose is the following:  if the time derivative of the Lyapunov function is positive, meaning that there is a model mismatch which is not sufficiently compensated, then the size of the compensating term is augmented in order to increase the robustness of the control scheme. A schematic representation of the proposed strategy is reported in Fig.~\ref{fig:ARFBL scheme}. 
A theoretical characterization of the control architecture we propose is provided together with numerical results obtained running simulations on a 2 degrees of freedom (DOF) RR planar robotic arm in two different scenarios. In the first scenario, we perturbed the nominal model of the robot in such a way to have theoretical bounds on the size of the model mismatch. This allowed us to compare our strategy with the standard robust feedback linearization scheme proposed in section 8.5.3 of \cite{siciliano2010robotics}. Instead in the second scenario the dynamical model was derived adopting a black-box procedure, specifically the Gaussian Process framework with Radial Basis Function (RBF) kernel, see \cite{giacomuzzo2022advantages}. In both the considered scenarios our scheme confirmed its effectiveness in compensating unknown model uncertainties.

The paper is structured as follows. In Section \ref{sec:Prob_For} we formulate the problem of interest. In Section \ref{sec:ARFLC} we describe the novel adaptive robust feedback linearization scheme we develop together with a theoretical analysis of its convergence properties. In Section \ref{sec:black-box} we briefly discuss architectures recently proposed for learning robot dynamics directly from data. In Section \ref{sec:NumericalResults} we illustrate the numerical results we obtained. Finally in Section \ref{sec:conclusions} we gather our conclusions.

\section{Problem Formulation}\label{sec:Prob_For}

In this paper, we focus on a class of Lagrangian systems whose dynamics can be described by
\begin{equation}\label{eq:Dynamics}
M(q(t)) \ddot{q}(t) + C(q(t), \dot{q}(t)) \dot{q}(t) + g(q(t)) = \tau(t),
\end{equation}
where
\begin{itemize}
    \item $q=[q_1,\ldots, q_N]^T$ denote the $N$-th dimensional vector of generalized coordinates where $q_i$ can be either a displacement or an angle; accordingly, $\dot{q}=[\dot{q}_1,\ldots, \dot{q}_N]^T$ and $\ddot{q}=[\ddot{q}_1,\ldots, \ddot{q}_N]^T$ are the vectors of the generalized velocities and accelerations, respectively;
    \item $M(q(t)) \in \mathbb{R}^{N \times N}$ represents the inertia matrix which is a definite positive matrix for any $q(t)$;
    \item $C(q(t), \dot{q}(t)) \in \mathbb{R}^{N \times N}$ depends on the Coriolis and centrifugal forces;
    \item $g(q(t))\in \mathbb{R}^N$ describes the gravitational effects;
    \item $\tau=[\tau_1,\ldots, \tau_N]^T$ represents the vector of the generalized torques applied to system.
\end{itemize}
We consider the problem of making the system positions, velocities and accelerations $\left(q(t), \dot{q}(t), \ddot{q}(t)\right)$ to track a desired trajectory $\left(q^d(t), \dot{q}^d(t), \ddot{q}^d(t)\right)$.
Since there is no risk of confusion, for the sake of notational convenience in the following we omit the dependency on time $t$ in the vectors $q$ and $q_d$ and their derivatives.  

Assuming that perfect knowledge of $M(q), C(q,\dot{q}),g(q)$ is available, a standard control architecture proposed in the literature to solve the tracking control problem is the inverse dynamics control law given by the cascade of an outer loop and an inner loop. The outer loop consists on the following feedback linearization step 
\begin{equation}\label{eq:Feedback_Lin}
\tau = M(q) a + n(q, \dot{q}), 
\end{equation}
where $n(q, \dot{q})=C(q,\dot{q}) \dot{q} + g(q)$ and
where $a$ is an auxiliary control. The control in \eqref{eq:Feedback_Lin} allows to eliminate the nonlinearities of the system. Indeed, by substituting \eqref{eq:Feedback_Lin} into \eqref{eq:Dynamics} we obtain the double integrator dynamics 
$$
\ddot{q}=a,
$$
which can be asymptotically stabilized by designing, within the inner loop, the auxiliary input $a$ as the sum of a feedforward term and a PD controller, that is,
\begin{equation}\label{eq:auxiliary_law}
a = \ddot{q}_d + K_P e + K_D \dot{e}, 
\end{equation}
where $e=q_d - q$ is the position error, $\dot{e}=\dot{q}_d -\dot{q}$ is the velocity error, and $K_P, K_D$ are, respectively, the proportional and derivative gain matrices. 

By replacing \eqref{eq:auxiliary_law} into \eqref{eq:Feedback_Lin} we obtain the overall control scheme
\begin{equation}\label{eq:Fed_Lin_Scheme}
\tau = M(q)(\ddot{q}_d + K_P e + K_D \dot{e}) + n(q, \dot{q}).
\end{equation}
It is straightforward to see that, by applying \eqref{eq:Fed_Lin_Scheme}, the dynamics of the position error are described by the following homogeneous second-order differential equation 
$$
\ddot{e} + K_D \dot{e} +K_P e=0.
$$
It turns out that, under the assumption that $K_P,K_D$ are both definite positive matrices, i.e., $K_P,K_D>0$, $e$ converges exponentially to zero independently from the initial condition $e(0), \dot{e}(0)$.

The above control scheme strongly relies on the assumption that a perfect knowledge of the model is available. In practice this assumption is never satisfied and model uncertainty is unavoidable. From now on, let us assume that only estimates $\widehat{M}$, $\widehat{C}$, $\widehat{g}$ of $M,C,g$ are available, where typically $\widehat{M} \neq M$, $\widehat{C}\neq C$, $\widehat{g}\neq g$. In this case
by applying the control law \eqref{eq:Fed_Lin_Scheme} using the available information, that is,
\begin{equation}\label{eq:Fed_Lin_Scheme_uncertaint}
\tau = \widehat{M}(q)(\ddot{q}_d + K_P e + K_D \dot{e}) + \widehat{n}(q, \dot{q}),
\end{equation}
where $\widehat{n} = \widehat{C} \dot{q} + \widehat{g}$, one can show that the error dynamics are ruled by
$$
\ddot{e} + K_D \dot{e} +K_P e=\eta,
$$
where
\begin{equation}\label{eq:eta}
\eta=(I-M^{-1}\widehat{M}) a - M^{-1} \tilde{n},
\end{equation}
with $\tilde{n}=\widehat{n}-n$ and $a$ defined as in \eqref{eq:auxiliary_law}. In this case only a practical stability objective can be attained.

In \cite{Spong1987} and \cite{siciliano2010robotics}, in order to find control laws ensuring error convergence to zero while tracking a trajectory even in the presence of uncertainties, the auxiliary law is modified as
\begin{equation}\label{eq:modified_auxiliary_law}
a = \ddot{q}_d + K_P e + K_D \dot{e}+ w, 
\end{equation}
where the term $w$ is added to guarantee robustness to the effects of uncertainty described by $\eta$. The vector $w$ is designed using the Lyapunov direct method. Let 
$$
\xi =\begin{bmatrix}
        e  \\
        \dot e
    \end{bmatrix} \,\,\in\,\, \mathbb{R}^{2N},
$$
be the system state. Then, plugging \eqref{eq:modified_auxiliary_law}
into \eqref{eq:eta}, one can obtain, after some standard algebraic manipulations, 
\begin{equation}
\dot{\xi}= \tilde{H} \xi +D (\eta- w)
\end{equation}
where
$$
\tilde{H}=\left[
\begin{array}{cc}
0 & I \\
-K_P & -K_D
\end{array}
\right
], \qquad 
D=\left[
\begin{array}{c}
0 \\
I
\end{array}
\right
].
$$
To determine $w$, the following positive definite quadratic form is introduced as Lyapunov function candidate
$$
V(\xi)=\xi^T Q \xi,
$$
where $Q$ is a $(2N \times 2N)$ positive definite matrix \footnote{The matrix $Q$ is a block matrix of the form $Q=\left[\begin{array}{cc} Q_{11} & Q_{12} \\ Q_{12}^T & Q_{22}
\end{array}\right]$, where, for technical reasons that we do not discuss in this paper, all the blocks $Q_{11}, Q_{12}, Q_{22}$ are assumed to be full rank.}. In \cite{siciliano2010robotics}, it is proved that by adopting
$$
w=\frac{\rho}{\|z\|}z, 
$$
where $z=D^TQ\xi$ and $\rho$ is chosen in such a way that  $\rho \geq \|\eta\|$ then the time derivative of $V$, i.e., 
\begin{equation}\label{eq:V_dot}
\dot{V}(\xi)=\dot{\xi}^T Q \xi + \xi^T Q \dot{\xi}, 
\end{equation}
is less than zero for all $\xi \neq 0$, thus ensuring that $\xi \to 0$ asymptotically.

By noticing that $\eta$ in \eqref{eq:eta} is a function of $q, \dot{q}, \ddot{q}_d$, the following assumptions are made

\begin{equation}\label{eq:ass1}
\left\| I-{M}^{-1}\left( q\right) \widehat M\left( q\right) \right\| \leq \alpha \leq 1,  \quad \forall\, q,
\end{equation}
 
\begin{equation}\label{eq:ass2}
    \left\| \widetilde{n}\left( q,\dot q\right) \right\| \leq \Phi <  \infty \quad \forall \,q, \dot q,
\end{equation}

\begin{equation}\label{eq:ass3}
  \sup_{t \geq 0}  \left\| \ddot q_{d} \right\| < Q_{M} <  \infty  \quad \forall\,  \ddot q_d,
\end{equation}

\begin{equation}\label{eq:ass4}
0< M_{min} \leq \|M^{-1}(q)\| \leq M_{max} < \infty, \qquad \forall\, q.
\end{equation}
Assume that $\alpha, \Phi, Q_M, M_{max}$ are a-priori known. Then, setting 
\begin{equation}\label{eq:rho}
\rho> \frac{1}{1-\alpha} \left(\alpha Q_M+\alpha \|K\|\,\|\xi\| +M_{max} \Phi\right),
\end{equation}
where $K=\left[K_P \,\,\,K_D\right]$,
it holds that $\rho > \|\eta\|$ for all $q, \dot{q}, \ddot{q}_d$, and, in turn, $\dot{V} <0$.

As discussed in Section section 8.5.3 of \cite{siciliano2010robotics}, all the trajectories generated by the above robust control are attracted on the sliding hyperplane $z=0$ and tend towards the origin of the error state space with a time evolution governed by the size of $w$, i.e., $\rho$.

In reality, the digital controllers employed in the practical implementation impose a control signal that commutes at a finite frequency, and the trajectories oscillate around the sliding subspace with a magnitude as low as the frequency is high.
This chattering effect can be eliminated by adopting a control law of this type
\begin{equation}\label{eq:sat}
w=\left\{
\begin{array}{lcr}
\frac{\rho}{\|z\|}z & \text{if} & \|z\| \geq \epsilon\\
\frac{\rho}{\epsilon}z & \text{if} & \|z\| < \epsilon
\end{array}
\right.
\end{equation}
where $\epsilon>0$.
It is worth stressing that the above law does not guarantee that the error converges to zero, but the error is allowed to vary within a boundary layer whose thickness depends on $\epsilon$.

Now, observe that the design of $\rho$ in \eqref{eq:rho} relies on the a-priori information of the bounds on the size of the model uncertainties as described in equations in \eqref{eq:ass1}, \eqref{eq:ass2}, \eqref{eq:ass3} and \eqref{eq:ass4}. If this knowledge is not available, the design of $\rho$ is more challenging. In this paper, we aim at proposing an adaptive law for $\rho$ still ensuring that $\xi \to 0$ or $\xi$ enters within a small boundary layer, without the knowledge of $\alpha, \Phi, Q_M, M_{max}$.

\section{Adaptive robust feedback linearization control}\label{sec:ARFLC}

The method we propose is still a robust version of the feedback linearization strategy of the form
\begin{equation}\label{eq:Fed_Lin_Scheme_robust}
\tau = \hat{M}(q)\left(\ddot{q}_d + K_P e + K_D \dot{e} + \rho \frac{z}{\|z\|}\right) + \hat{n}(q, \dot{q}),
\end{equation}
with $z=D^TQ\xi$, where the value of $\rho$ is updated based on the value of $\dot{V}$ as defined in \eqref{eq:V_dot}. In particular, the value of $\rho$ is augmented when $\dot{V}$ is positive in order to compensate for the model uncertainties of the system. Formally, the updating law we propose is described as follows
\begin{equation}\label{eq:Updating_law_rho_z}
    \dot \rho = \begin{cases}
k_{\rho} & \text{if} \quad  \dot{V} \geq 0 \,\,\, \text{and}\,\,\, \|z\|\geq \epsilon \\
0   & \text{if}  \quad  \text{otherwise}
\end{cases} 
\end{equation}
where $k_{\rho} > 0 $ is a parameter tuning the increasing rate of $\rho$ and where the condition $\|z\| \geq \epsilon$ is added to avoid the chattering effect described at the end of Section \ref{sec:Prob_For}.


In the following, we provide a theoretical characterization of the law described by combining \eqref{eq:Fed_Lin_Scheme_robust} with \eqref{eq:Updating_law_rho_z} and \eqref{eq:sat}. The results we illustrate are based on the following assumptions.

\begin{assumption}\label{ass:Int-law1}
There exist $\alpha, \Phi, Q_M, M_{min}, M_{max}$ such that \eqref{eq:ass1}, \eqref{eq:ass2}, \eqref{eq:ass3} and \eqref{eq:ass4} hold true, but the values of $\alpha, \Phi, Q_M, M_{min}, M_{max}$ are not known. 
\end{assumption} 



\begin{assumption}\label{ass:Int-law2}
Consider the control law in \eqref{eq:Fed_Lin_Scheme_robust} with $\rho=0$ for all $t$. Then, the set $\Xi_0$ defined as
$$
\Xi_0=\left\{ \xi : \dot{V}(\xi)\geq 0 \right\},
$$
is a compact set.
\end{assumption}

\begin{assumption}\label{ass:Int-law3}
The function $\eta$ in \eqref{eq:eta} is a globally Lipschitz function of the time $t$.
\end{assumption}

We start our analysis with the following preliminary lemma.
\begin{lemma}
Assume Assumptions \ref{ass:Int-law1}, \ref{ass:Int-law2} and \ref{ass:Int-law3} hold true. Then,
there exists $\bar{\rho}>0$ such that if we implement \eqref{eq:Fed_Lin_Scheme_robust} with $w=\rho \,z/\|z\|$ where $\rho \geq \bar{\rho}$ then
$\dot{V}(\xi)<0$ for all $\xi \neq 0$.
\end{lemma}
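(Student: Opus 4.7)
The plan is to isolate the $\rho$-dependent contribution to $\dot V$, and then use the compactness of $\Xi_0$ granted by Assumption~\ref{ass:Int-law2} to bound the remainder. Starting from $\dot\xi = \tilde H\xi + D(\eta-w)$ with $\eta = (I-M^{-1}\widehat M)(\ddot q_d + K_P e + K_D\dot e + w) - M^{-1}\tilde n$ and substituting $w=\rho z/\|z\|$ into $\dot V = -\xi^T P\xi + 2z^T(\eta-w)$, where $P := -(\tilde H^T Q + Q\tilde H)$ is positive definite by the Lyapunov construction underlying the choice of $Q$, I would rearrange to obtain
\begin{equation*}
\dot V(\xi) = \dot V_0(\xi) - 2\rho\,\frac{z^T M^{-1}\widehat M\, z}{\|z\|},
\end{equation*}
with $\dot V_0(\xi) := -\xi^T P\xi + 2z^T[(I-M^{-1}\widehat M)(\ddot q_d + K_P e + K_D\dot e) - M^{-1}\tilde n]$ matching $\dot V$ at $\rho=0$. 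Applying \eqref{eq:ass1} via $M^{-1}\widehat M = I-(I-M^{-1}\widehat M)$ yields $z^T M^{-1}\widehat M z \geq (1-\alpha)\|z\|^2$, hence the key estimate $\dot V(\xi) \leq \dot V_0(\xi) - 2\rho(1-\alpha)\|z\|$.

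I would then split on whether $\xi\in\Xi_0$. If $\xi\notin\Xi_0$ then $\dot V_0(\xi)<0$ by definition and the $\rho$-term is non-positive, so $\dot V<0$ for every $\rho\geq 0$. For $\xi\in\Xi_0\setminus\{0\}$ I would first observe that $\|z\|>0$: if $z=0$, then $\dot V_0(\xi)=-\xi^T P\xi<0$ for $\xi\neq 0$, contradicting $\xi\in\Xi_0$. Next, using compactness of $\Xi_0$ together with continuity of the functions involved, there is a constant $C>0$ with
\begin{equation*}
\dot V_0(\xi) \leq -\xi^T P\xi + 2C\|z\| \leq 2C\|z\| \qquad \forall\,\xi\in\Xi_0,
\end{equation*}
since the non-positive $-\xi^T P\xi$ can be dropped and the residual is $z^T$ times a continuous vector field uniformly bounded on the compact set. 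Picking any $\bar\rho>C/(1-\alpha)$ then makes $\dot V\leq 2\|z\|(C-\bar\rho(1-\alpha))<0$ on $\Xi_0\setminus\{0\}$, completing the argument.

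The delicate point is the linear-in-$\|z\|$ scaling of the bound on $\dot V_0$ over $\Xi_0$: a merely uniform bound $\dot V_0\leq M_0$ would force $\bar\rho\to\infty$ as $\|z\|\to 0$ inside $\Xi_0$, whereas the scaling $\dot V_0\leq 2C\|z\|$ allows the counter-term $-2\rho(1-\alpha)\|z\|$ to absorb it uniformly. This scaling is available exactly because the only positive contribution to $\dot V_0$ appears multiplied by $z^T$. Assumption~\ref{ass:Int-law3} does not seem to enter this lemma and presumably plays its role in the subsequent analysis of the adaptive update~\eqref{eq:Updating_law_rho_z}.
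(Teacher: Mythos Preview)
Your argument is correct and follows essentially the same route as the paper: both split on $\xi\in\Xi_0$ versus $\xi\notin\Xi_0$, use compactness of $\Xi_0$ to bound $\|\xi\|$, and then invoke the threshold in \eqref{eq:rho}; your constant $C/(1-\alpha)$ coincides with the paper's $\bar\rho=\frac{1}{1-\alpha}(\alpha Q_M+\alpha\|K\|\xi_M+M_{\max}\Phi)$ once the bracket is bounded via \eqref{eq:ass1}--\eqref{eq:ass4}. Your version is in fact more explicit than the paper's in two respects: you spell out the decomposition $\dot V=\dot V_0-2\rho\,z^T M^{-1}\widehat M z/\|z\|$ (the paper asserts the monotonicity in $\rho$ without isolating the $\eta$-dependence on $w$), and you verify $\|z\|>0$ on $\Xi_0\setminus\{0\}$, which the paper leaves implicit. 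One small wording caveat: the bound on the bracket does not come from ``continuity on the compact set $\Xi_0$'' alone, since the field depends on $q,\dot q,\ddot q_d$ and not only on $\xi$; it comes from the global bounds \eqref{eq:ass1}--\eqref{eq:ass4}, with compactness of $\Xi_0$ used only to bound $\|K\|\|\xi\|$. Your observation that Assumption~\ref{ass:Int-law3} plays no role in this lemma is also correct.
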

\begin{proof}
In Section $8.5.3$ of \cite{siciliano2010robotics} it has been shown that when implementing \eqref{eq:Fed_Lin_Scheme_robust} then 
\begin{align*}
\dot{V} &= -\xi^T P \xi + 2 \xi^T QD(\eta-\rho \,z/\|z\|)\\
&=-\xi^T P \xi + 2 z^T(\eta-\rho \,z/\|z\|),
\end{align*}
where $P$ is a symmetric positive definite matrix such that
$$
\tilde{H}^TQ + Q \tilde{H}=-P.
$$
Now let $\rho_1 < \rho_2$ and let $\dot{V}_i(\xi)$ denote the evaluation of $\dot{V}$ on $\xi$ for $\rho=\rho_i$, $i=1,2$. From the above expression of $\dot{V}$ it easily follows that $\dot{V}_1 (\xi) > \dot{V}_2(\xi)$ for $\xi \neq 0$. 

By definition of $\Xi_0$ and from the previous reasoning, it follows that, if $\xi \notin \Xi_0$ then $\dot{V}(\xi) <0$ for any $\rho\geq 0$.

Consider now the case where $\xi \in \Xi_0$. Since $\Xi_0$ is compact, we have that there exist $\xi_M$ such that $\|\xi \| \leq \xi_M$ for all $\xi \in \Xi_0$. Then, from \eqref{eq:rho}, it follows that, if we define $\bar{\rho}$ as
$$
\bar{\rho} = \frac{1}{1-\alpha} (\alpha Q_M + \alpha \|K\| \xi_M + M_{\text{max}}\Phi)
$$
then $\dot{V}(\xi)<0$ for all $\xi \neq 0$.
\end{proof}

We have the following result.

\begin{proposition}
Let $q(0)$ and $\dot{q}(0)$ be the initial condition and let $q_d(t), \dot{q}_d(t), \ddot{q}(t)$ be the desired trajectory to be tracked. Assume Assumptions \ref{ass:Int-law1}, \ref{ass:Int-law2} and \ref{ass:Int-law3} hold true. Then, the trajectory generated by the control law \eqref{eq:Fed_Lin_Scheme_robust} combined with \eqref{eq:Updating_law_rho_z} and \eqref{eq:sat} is such that one of the following two situations is verified
\begin{itemize}
    \item[(i)] $\xi(t)$ converges to the set 
    $$S_\epsilon =\left\{\xi : \|z\| =\|D^TQ \xi\| < \epsilon\right\};$$ 
    \item[(ii)] there exists a vector $\bar{e} \in \mathbb{R}^N$ such that 
    $$
    \lim_{t \to \infty}\xi(t) = \bar{\xi} :=
    \left[
    \begin{array}{c}
\bar{e} \\
0
    \end{array}
    \right],
    $$ 
    i.e., $\dot{e} \to 0$ and $e \to \bar{e}$, and 
    $$
    \|D^TQ \bar{\xi}\|> \epsilon.
    $$
\end{itemize}
\end{proposition}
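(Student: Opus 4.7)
\emph{Proof plan.} My approach is to (a) use the preceding lemma to bound $\rho(t)$, (b) use Assumption~\ref{ass:Int-law2} to bound $\xi(t)$, (c) convert boundedness of $\rho$ into an asymptotic dichotomy for the trajectory, and (d) conclude with a Barbalat/invariance argument on the complement of $S_\epsilon$.

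First, I would show $\rho(t)$ is bounded. By \eqref{eq:Updating_law_rho_z}, $\rho$ is monotonically non-decreasing. If $\rho(t) \to \infty$, then eventually $\rho(t) \geq \bar\rho$ from the preceding lemma, so $\dot V(\xi(t)) < 0$ for all $\xi(t) \neq 0$; combined with $\|z\| = 0 < \epsilon$ at $\xi = 0$, this forces $\dot\rho \equiv 0$ for all large $t$, a contradiction. Hence $\rho(t) \to \rho_\infty$ for some finite $\rho_\infty$. For $\xi$, I would use the observation (from the lemma's proof) that increasing $\rho$ only decreases $\dot V$, so $\{\xi : \dot V(\xi,\rho) \geq 0\} \subseteq \Xi_0$ for every $\rho \geq 0$. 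Setting $V^{\ast} = \max_{\xi \in \Xi_0} V(\xi)$ (finite by compactness of $\Xi_0$), every sublevel set $\{V \leq c\}$ with $c > V^{\ast}$ is forward invariant, so $\xi(t)$ stays in a bounded region.

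Second, integrating the update law gives $\rho_\infty - \rho(0) = k_\rho\,\mu(\mathcal{A})$, with $\mathcal{A} = \{t \geq 0 : \dot V(t) \geq 0 \text{ and } \|z(t)\| \geq \epsilon\}$, hence $\mu(\mathcal{A}) < \infty$. By Assumption~\ref{ass:Int-law3} and boundedness of $\xi$ and $\rho$, both $z(t)$ and $\dot V(t)$ are uniformly continuous in $t$, which rules out arbitrarily long intervals on which $\dot V \geq \delta > 0$ and $\|z\| \geq \epsilon$ hold simultaneously. Thus asymptotically the dichotomy $\dot V(t) < 0$ \emph{or} $\|z(t)\| < \epsilon$ holds. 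If $\xi(t)$ eventually stays within (or converges to) $S_\epsilon$, case~(i) holds. Otherwise there exist $\delta > 0$ and $t_k \to \infty$ with $\|z(t_k)\| \geq \epsilon + \delta$; by uniform continuity of $z(t)$, $\|z(t)\| \geq \epsilon$ on intervals of uniform length around each $t_k$, on which the previous step forces $\dot V(t) < 0$. Consequently $V$ is non-increasing modulo a vanishing perturbation, and together with boundedness from below this yields $V(\xi(t)) \to V_\infty$; Barbalat then gives $\dot V(t) \to 0$.

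Finally, to obtain $\xi(t) \to \bar\xi$ with $\bar\xi = [\bar e,\,0]^T$, I would combine $\dot V \to 0$, the explicit form $\dot V = -\xi^T P \xi + 2 z^T\eta - 2\rho_\infty\|z\|$, and the error equation $\ddot e + K_D \dot e + K_P e = \eta - w$ to argue that any subsequential limit of $\xi(t)$ must have zero velocity component, and that the Lyapunov structure upgrades subsequential convergence into convergence to a single point. The main obstacle I expect is exactly this last step: promoting subsequential limits into single-limit-point convergence in a non-autonomous setting (time-varying $\eta$) where classical LaSalle invariance does not apply directly, and in particular explaining why $\dot e$ must vanish and why $e$ cannot drift along a curve despite the persistent forcing by $\eta(t)$.
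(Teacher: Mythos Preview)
Your outline matches the paper's proof closely: bound $\rho$ above by $\bar\rho$ via the lemma, use boundedness of $\rho$ together with the Lipschitz assumption on $\eta$ to force $\dot V \to 0$, and then read off the limiting behaviour of $\xi$. The paper argues $\dot V \to 0$ by a direct contradiction (if $\dot V(t_k) > \bar\epsilon$ along a sequence $t_k \to \infty$, Lipschitzness of $\eta$ gives $\dot V > 0$ on intervals of uniform length, so $\rho$ would be pushed past $\bar\rho$), which is your finite-measure/Barbalat argument in compressed form; your version is in fact more careful, since you explicitly track the condition $\|z\|\geq\epsilon$ needed to trigger $\dot\rho>0$ and you justify boundedness of $\xi$, neither of which the paper checks.

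The step you flag as the main obstacle is exactly where the paper is weakest: from $\dot V \to 0$ and the identity $\dot V = \dot\xi^T Q\xi + \xi^T Q\dot\xi$ the paper simply asserts $\dot\xi \to 0$, and from $\dot e \to 0$ it simply asserts $e \to \bar e$ for some $\bar e$, with no further justification. Neither implication holds in general (for instance $e(t)=\sin\sqrt{t}$ satisfies $\dot e \to 0$ but does not converge, and $\dot V=2\xi^TQ\dot\xi\to 0$ does not force $\dot\xi\to 0$), so your reservations are well founded; the paper's proof shares the same gap but does not acknowledge it.
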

\begin{proof}
Observe that $\rho(t)$ is non decreasing and that $\rho(t) \leq \bar{\rho}$. Indeed notice that if $\rho(\bar{t})= \bar{\rho}$ at some time instant $\bar{t}$ then from Assumption \ref{ass:Int-law2} it would follow that $\dot{V}<0$ for all $t\geq \bar{t}$ and hence $\rho(t)$ would stop to increase. Moreover for $t \geq \bar{t}$ it would also hold that $\dot{V}<0$ for all $\xi$ and hence the first case would be verified.

From the previous reasoning, it turns out that 
$\rho \to \rho^*$ for some $\rho^*$. 
Assume that $\rho^* < \bar{\rho}$. Let $\Xi_{\rho^*}=\left\{\xi : \dot{V}(\xi) \geq 0\right\}$ when implementing \eqref{eq:Fed_Lin_Scheme_robust} with $\rho=\rho^*$. If $\Xi_{\rho^*} \subseteq S_\epsilon$, then again the first case is verified.

Instead suppose that $\Xi_{\rho^*}$ is not included in $S_\epsilon$. First of all observe that $\dot{V} \to 0$. Indeed, by contraction assume that $\dot{V}$ does not converge to zero. Then, this means that there exists $\bar{\epsilon}>0$ and an infinite sequence of time instants $t_{k}$, $k=0,1,2\ldots$ where $\dot{V}(t_k)> \bar{\epsilon}$. As a consequence of Assumption \ref{ass:Int-law3}, it follows that there exists an infinite amount of time in which $\dot{V}>0$ implying that $\rho^* > \bar{\rho}$ contradicting the fact that $\rho^* \leq \bar{\rho}$.

Since
$$
\dot{V}=\dot{\xi}^T Q \xi + \xi^T Q \dot{\xi}
$$
we get that $\dot{\xi} \to 0$ and hence $\dot{e} \to 0$. Therefore $e \to \bar{e}$ for some $\bar{e}$. Let $\bar{\xi}$ be defined as in the statement of the Proposition. Then, if $\bar{\xi} \in S_\epsilon$ then the first case is verified, otherwise we have that $\|D^TQ\bar{\xi}\| >\epsilon$ and hence the second case is verified.
 \end{proof}
Notice that the second situation states that the system will asymptotically track the desired trajectory with null velocity error but with a position error that is not negligible. Nevertheless, in all the numerical simulations we ran, we never encountered this situation and the error $\xi$ always converged to the set $S_\epsilon$.
However in order to get rid of this situation we modify the law in \eqref{eq:Updating_law_rho_z} adding a novel condition as follows
\begin{equation}\label{eq:Updating_law_rho_z_modified}
    \dot \rho = \begin{cases}
k_{\rho} & \text{if} \quad  \dot{V} \geq 0 \,\,\, \text{and}\,\,\,  \|z\|\geq  \epsilon \\
k_{\rho} & \text{if} \quad  -\epsilon_1 \leq \dot{V} \leq 0   \,\,\, \text{and}\,\,\, \|z\| \geq  \epsilon \\
0   & \text{if}  \quad  \text{otherwise}
\end{cases} 
\end{equation}
where $\epsilon_1>0$ is another small positive parameter. The following result characterizes the new law.

\begin{corollary}
Let $q(0)$ and $\dot{q}(0)$ be the initial condition and let $q_d(t), \dot{q}_d(t), \ddot{q}(t)$ be the desired trajectory to be tracked. Assume Assumptions \ref{ass:Int-law1}, \ref{ass:Int-law2} and \ref{ass:Int-law3}  hold true. Then, the trajectory generated by the control law \eqref{eq:Fed_Lin_Scheme_robust} combined with \eqref{eq:Updating_law_rho_z} and \eqref{eq:sat} is such that  $\xi(t)$ converges to the set $S_\epsilon$. 
\end{corollary}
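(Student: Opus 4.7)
The statement as written refers to the plain updating rule \eqref{eq:Updating_law_rho_z}, but this must be a transcription slip: the Proposition's second alternative (scenario (ii)) is precisely a trajectory with $\|D^TQ\bar\xi\|>\epsilon$ that does not reach $S_\epsilon$, and the paragraph just above the corollary introduces \eqref{eq:Updating_law_rho_z_modified} explicitly ``in order to get rid of this situation''. My plan is therefore to prove the corollary with the enlarged rule in place of the plain one, arguing by contradiction against scenario (ii) of the Proposition, since case (i) already delivers the conclusion.

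The first step is to reuse the opening of the Proposition's proof, which is insensitive to which of the two updating rules is in force: $\rho(t)$ is non-decreasing and bounded above by $\bar\rho$ (from the preliminary Lemma), so $\rho(t)\to\rho^*\le\bar\rho$. Assume now that scenario (ii) holds. By continuity of $D^TQ\xi$ and the strict inequality $\|D^TQ\bar\xi\|>\epsilon$, there exists $T_1$ with $\|z(t)\|\ge\epsilon$ for every $t\ge T_1$. The Proposition also yields $\dot V(t)\to 0$ in this branch, so there exists $T_2$ with $\dot V(t)\ge-\epsilon_1$ for every $t\ge T_2$. Setting $T=\max\{T_1,T_2\}$, for every $t\ge T$ the pair $(\dot V,\|z\|)$ lies in the union of the two ``increase'' clauses of \eqref{eq:Updating_law_rho_z_modified}, so $\dot\rho(t)=k_\rho$ on $[T,\infty)$ and $\rho(t)\to+\infty$, contradicting $\rho(t)\le\bar\rho$. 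Scenario (ii) is therefore impossible and the trajectory must converge to $S_\epsilon$.

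The main obstacle is the rigorous justification of the step ``$\dot V(t)\to 0$ in scenario (ii), hence eventually $\dot V\ge-\epsilon_1$''. The Proposition derives $\dot V\to 0$ from $\xi\to\bar\xi$ through $\dot V=2\xi^TQ\dot\xi$, but this actually requires $\dot\xi\to 0$, which does not follow from $\xi\to\bar\xi$ alone. The upgrade I would use is a Barbalat-type argument: Assumptions \ref{ass:Int-law1} and \ref{ass:Int-law3}, together with the lower bound in \eqref{eq:ass4} and the fact that $\rho$ and $z$ are bounded with $\|z\|\ge\epsilon$, make the right-hand side of the error dynamics uniformly continuous along the trajectory, so $\ddot e$ is uniformly continuous; combined with $\dot e\to 0$ given in scenario (ii), Barbalat's lemma yields $\ddot e\to 0$, hence $\dot\xi\to 0$ and $\dot V\to 0$. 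Once this is secured, the comparison with the threshold $-\epsilon_1$ and the resulting blow-up of $\rho$ are immediate, and the corollary follows from the Proposition.
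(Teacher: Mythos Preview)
The paper states the Corollary without proof, so there is nothing to compare your argument against; your reading that \eqref{eq:Updating_law_rho_z_modified} (not \eqref{eq:Updating_law_rho_z}) is the intended update law is clearly right, and your overall strategy---re-run the Proposition's analysis under the modified law and show that branch~(ii) now forces $\rho\to\infty$---is exactly the argument the authors presumably have in mind. Your Barbalat patch for the step ``$\dot V\to 0$'' is also a genuine improvement over what the Proposition's proof actually establishes.

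There is, however, a real slip in your first step. You assert that the bound $\rho(t)\le\bar\rho$ from the Proposition's proof ``is insensitive to which of the two updating rules is in force''. It is not. In the Proposition the reasoning is: once $\rho=\bar\rho$, the Lemma gives $\dot V<0$ everywhere, hence by the \emph{original} rule $\dot\rho=0$. Under the modified rule \eqref{eq:Updating_law_rho_z_modified}, having $\dot V<0$ no longer halts the growth of $\rho$: the second clause still fires whenever $-\epsilon_1\le\dot V<0$ and $\|z\|\ge\epsilon$, and the Lemma guarantees only $\dot V<0$, not $\dot V<-\epsilon_1$. So the a~priori ceiling $\bar\rho$ is lost, and with it your stated contradiction ``$\rho\to\infty$ versus $\rho\le\bar\rho$''.

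The fix is to relocate the contradiction inside branch~(ii) itself. There you already have $\xi(t)$ bounded and $\|z(t)\|\ge\epsilon$ for large $t$. Using the explicit expression from the Lemma's proof,
\[
\dot V=-\xi^TP\xi+2z^T\eta-2\rho\,\|z\|,
\]
together with $\|\eta\|\le\alpha\bigl(Q_M+\|K\|\,\|\xi\|+\rho\bigr)+M_{\max}\Phi$ and $\alpha<1$, one obtains $\dot V\le C-2(1-\alpha)\epsilon\,\rho$ for a constant $C$ depending only on the bound on $\xi$. Hence $\rho\to\infty$ would force $\dot V\to-\infty$, contradicting the $\dot V\to 0$ you already established. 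This closes the argument without ever needing an a~priori bound on $\rho$.
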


\section{Black-Box Dynamics Identification}\label{sec:black-box}
The robust control strategies presented in this work rely on the knowledge of a dynamics model of the system. Learning robot dynamics from data has gained increasing interest in recent years, due to the ever-growing complexity of robotic systems and the contemporary developments of efficient learning strategies. In such scenarios, black-box techniques, which learn dynamics functions considering only experimental data, without any specific knowledge about the underlying system, have shown promising results in approximating even complex, highly non-linear behaviors. 

Combining classic and well established model-based control strategies with black-box models is an emergent and active research trend in the control community. The main difficulty is related to the fact that pure black-box techniques, while estimating the overall torque, rarely provide the inertial, Coriolis and gravity components required to implement model-based control laws such as the one presented in  Section~\ref{sec:Prob_For}. Even when this is possible, for example using the approach proposed in \cite{giacomuzzo2022advantages}, the lack of structural properties makes the components estimates unreliable, see for example the performance of the Radial Basis Function model in \cite{giacomuzzo2022advantages}. 

To overcome this limitation, an emerging trend is represented by the so-called \textit{Physics-Informed} (PI) learning methods, which propose to exploit knowledge from physics to embed structure in black-box models. In the context of inverse dynamics learning of robotic manipulators, for example, in \cite{lutter2019deep} the authors propose a novel Neural Network (NN) architecture, called Deep Lagrangian Network (DeLaN), inspired by the Lagrangian equations. In DeLan two distinct feedforward NN have been adopted to model on the one side the inertia matrix elements and on the other side the potential energy; then the overall torques are derived by applying the Lagrangian equations. This network structure allows to easily compute the required inverse dynamics components and the system energy.

The Lagrangian equations have inspired also several Gaussian Process (GP) models \cite{rasmussen2003gaussian,cheng-vector-valued-RKHS-4invDyn, evangelisi-physically-consistent, giacomuzzo2023black}. GPs have shown better data efficiency and generalization properties if compared to NN models \cite{giacomuzzo2023embedding}. Also, they provide a straightforward way to embed prior knowledge through the kernel function \cite{ADL_GIP19}. In \cite{giacomuzzo2023black}, this property is exploited to derive a multi-output Gaussian process model named Lagrangian Inspired Polynomial (LIP) estimator, which defines the kinetic and potential energy of the system as GPs and then derive the model of the torques from the Lagrangian equations. Similarly to DeLaN, the LIP model allows to easily estimate the dynamics components and the system energies. Regarding the application of GPs for control, we refer the interested reader to \cite{care2023kernel}.

The aforementioned PI methods have been successfully applied in combination with standard feedback-linearization controllers, see for example \cite{dalla2021control, lutter2019_delan_control}. However, model inaccuracies are intrinsically present in black-box methods, and in general vary with the generalization properties of the estimator, the complexity of the system and the collected data, which makes impossible to derive accurate bounds such as the ones required by the robust strategy reviewed in Section~\ref{sec:Prob_For}. This fact justifies the introduction of uncertainty-agnostic robust control strategies, such as the one proposed in this paper.



\section{Numerical results}\label{sec:NumericalResults}


In this Section we present the results obtained by evaluating the proposed ARFBL scheme on a simulated setup. In detail, we implemented a RR planar manipulator, which is a 2 DOF robot with 2 revolute joints. We simulated the robot dynamics in Python, by means of the Scipy library \cite{2020SciPy-NMeth}, with the dynamic parameters in Table~\ref{table:dyn_param}.

We designed a trajectory tracking experiment, where the reference trajectory of the $i$-th joint, hereafter denoted as $q_i^d$, is defined by the sum of random sinusoidal trajectories, namely
\begin{equation}\label{eq:ref_trj}
    q^d_i(t) = A \sum_{i=1}^{N} \sin(\omega_i t),
\end{equation}
where:
\begin{itemize}
    \item \(A\) denotes the amplitude of the sinusoids;
    \item \(N\) is the number of sinusoidal components;
    \item \(\omega_i\) is the angular frequency of the \(i\)-th sinusoid chosen randomly from a uniform distribution, in such a way that $\omega_i \sim U([\omega_{min}, \omega_{max}])$, with $\omega_{min} = \pi $ rad/s and $\omega_{max}= 3\pi $ rad/s. 
\end{itemize}


In all the considered experiments, the controller parameters in \eqref{eq:Fed_Lin_Scheme_robust} and \eqref{eq:Updating_law_rho_z}  are set such that the proportional and derivative gains are diagonal matrices defined as  $K_P = k_p\, I$ and
$K_D = k_d\, I$, with $I$ being the identity matrix and $k_p = 100$ and $k_d = 2\sqrt{k_p}$ respectively. Regarding the choice of $\epsilon$, it was taken as $0.5$, which resulted in almost no chattering.
The experiments were conducted with $K_{\rho} = 1000$  for the first experiment and $K_{\rho} = 500$ for the second experiment. 

This section is structured as follows. First, in Section~\ref{sec:perturbed_dynamics} we compare the performance of our proposed ARFBL approach with the conventional Robust Feedback Linearization (RFBL), considering a perturbed version of the dynamical model for which we can exactly compute the theoretical bounds. Then, in Section~\ref{sec:GPR} we consider a black-box model learned by means of GPR and we compare the performance of the ARFBL strategy with the one of standard feedback linearization.

\begin{table}[htbp]
    \centering
    \caption{Dynamic parameters of RR manipulator}
    \label{table:dyn_param}
    \begin{tabular}{|l|c|c|}
        \hline
        Parameter & Symbol & Value \\
        \hline
        Mass of link 1 & $m_1$ & 7.8 kg \\
        Mass of link 2 & $m_2$ & 4.5 kg \\
        Length of link 1 & $l_1$ & 0.3 m \\
        Length of link 2 & $l_2$ & 0.15 m\\
        C.O.M of link 1 & $l_{c_{1}}$ & 0.1554 \\
        C.O.M of link 2 & $l_{c_{2}}$ & 0.0341 \\
        Inertia of link 1 & $I_1$ & 0.176 kg m$^{2}$\\
        Inertia of link 2 & $I_2$ & 0.0411 kg m$^{2}$\\
        Gravitational acceleration & $g$ & 9.8 m s$^{-2}$ \\
        \hline
    \end{tabular}
\end{table}

\subsection{First Experiment: Perturbed dynamics } \label{sec:perturbed_dynamics}

As a first experiment, we considered a nominal model of the RR robot, obtained by adding a small perturbation to the inertia, lengths, and masses of each link. In this scenario, it is possible to compute exactly the bounds reported in \eqref{eq:ass1} and \eqref{eq:ass2}, which enables to compare the performance of the proposed ARFBL with the one obtained using the RBFL in \eqref{eq:Fed_Lin_Scheme_robust}. 

\begin{figure*}[ht]
    \centering
    \includegraphics[width=.9\textwidth]{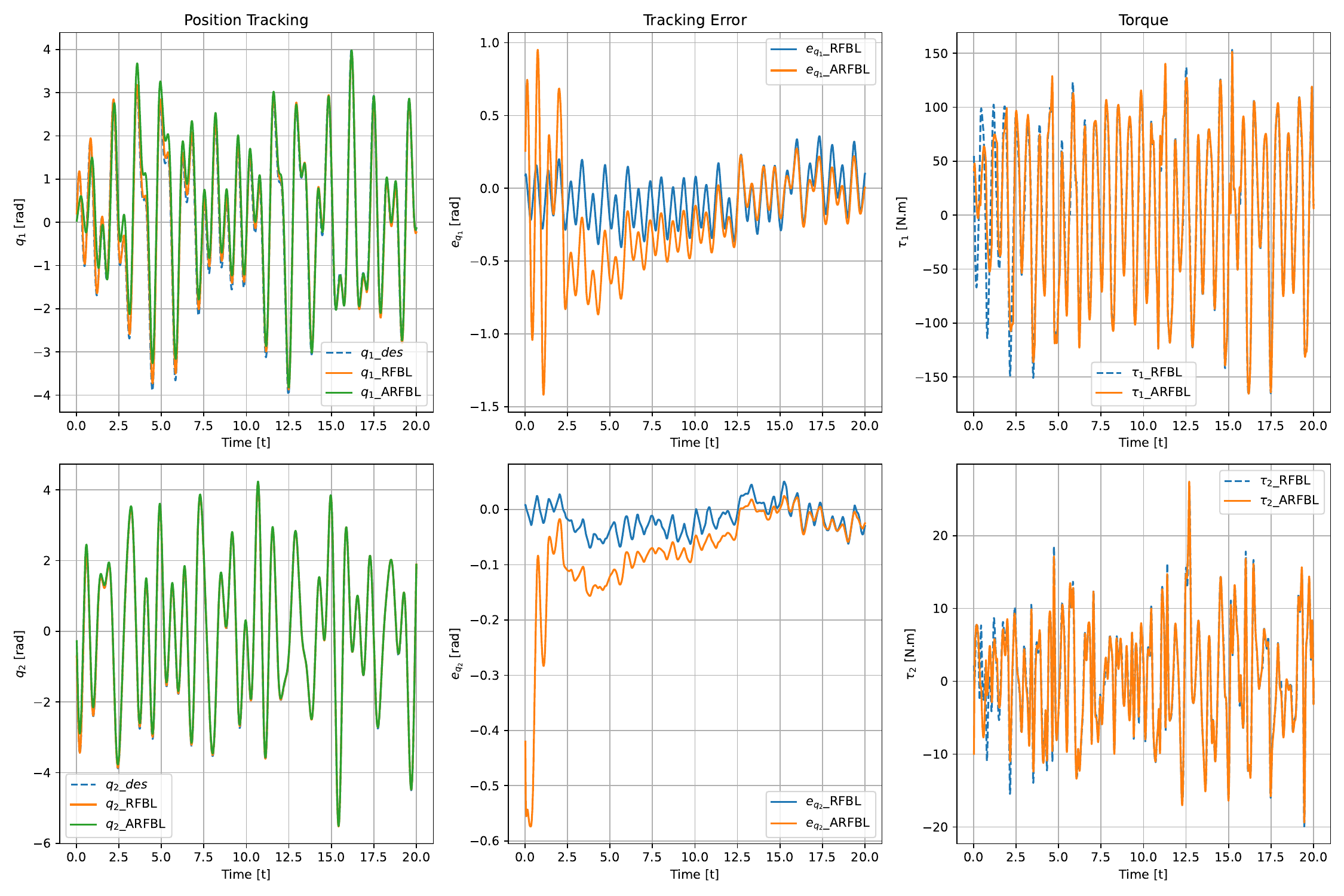}
    \caption{ Implementation results of ARFBL compared to RFBL}
    \label{fig:ARFBL_Vs_RFBL}
\end{figure*}

\begin{figure}[ht]
    \centering
    \includegraphics[width=\columnwidth]{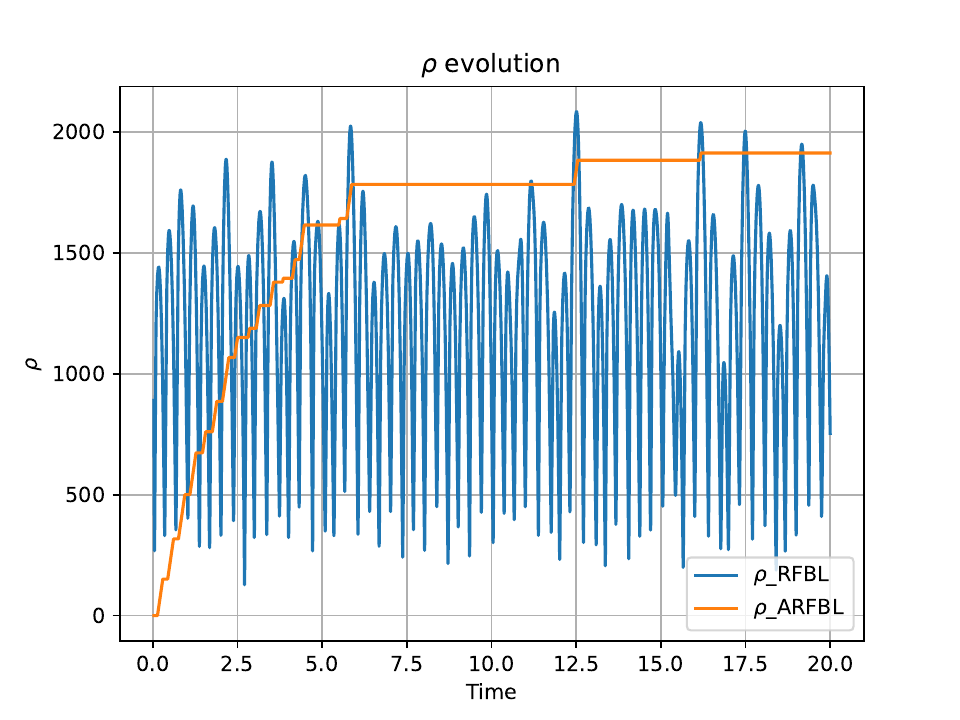}
    \caption{$\rho$ evolution of the proposed update method Vs True $\rho $ from RFBL }
    \label{fig:rho_evolution}
\end{figure}

In Fig.~\ref{fig:ARFBL_Vs_RFBL} we report the comparison in terms of joint position, tracking error, and actuation torque, while in Fig.~\ref{fig:rho_evolution} we plot the evolution of $\rho$. Results show that the ARFBL scheme converges to the same error dynamics of the standard RBFL, despite being independent of a priori knowledge concerning the uncertainty bounds. Regarding the evolution of the robust term, instead, ARFBL shows the capability of regulating the parameter $\rho$ effectively, maintaining its value within a proximity of the one derived from the uncertainty bounds. 


\subsection{Second Experiment: black-box dynamics} \label{sec:GPR}
In the second experiment, we considered a black-box dynamics model based on GPR, for which it is not possible to derive the bound in \eqref{eq:ass1}.

In order to obtain the model, we considered each joint torque $\tau_i$ in \eqref{eq:Dynamics} to be independent of the others and we learned it with a dedicated GPR model. For each torque component, we collected a dataset $\mathcal{D}_i = \{(x^t, y_i^t)\}$, with $x^t = (q(t), \dot{q}(t), \ddot{q}(t))$ and $y_i^t = \tau_i(t)$. We generated the dataset by evaluating \eqref{eq:Dynamics} on the true model, with trajectories of the same type of \eqref{eq:ref_trj}. We simulated the trajectories for $50$ seconds and we collected the data with a sampling frequency of $1$ Hertz, which resulted in datasets composed of $50$ samples. Regarding the kernel of the GPs, we used a standard RBF, whose hyperparameters are trained by means of marginal likelihood optimization, see \cite{rasmussen2003gaussian}. 
Moreover, when implementing the ARFBL and FBL strategies, we derived the estimate of the different dynamics components, namely the inertia matrix and the Coriolis and gravity torques, from the overall torque estimate using the strategy described in \cite{giacomuzzo2022advantages}.

In Fig.~ (\ref{fig:GP_experiment}) we compare the performance of the ARBFL with those of the standard FBL in terms of joint position, tracking error, and actuation torque. While the standard FBL fails the tracking task, leading to high tracking errors, ARFBL shows a notable ability to counteract uncertainties, achieving superior tracking accuracy with reduced error, and generating torque trajectories that are both smoother and of lower magnitude.
Finally, in Fig.~\ref{fig:rho_evolution_GP} we report the evolution of $\rho$. We observe that $\rho$ increases in all input regions where the GP dynamics fails to cancel the non-linearity properly, and stabilizes at a constant value upon the error's entrance into the boundary region defined in \eqref{eq:sat}.


\begin{figure*}[ht]
    \centering
    \includegraphics[width=.9\textwidth]{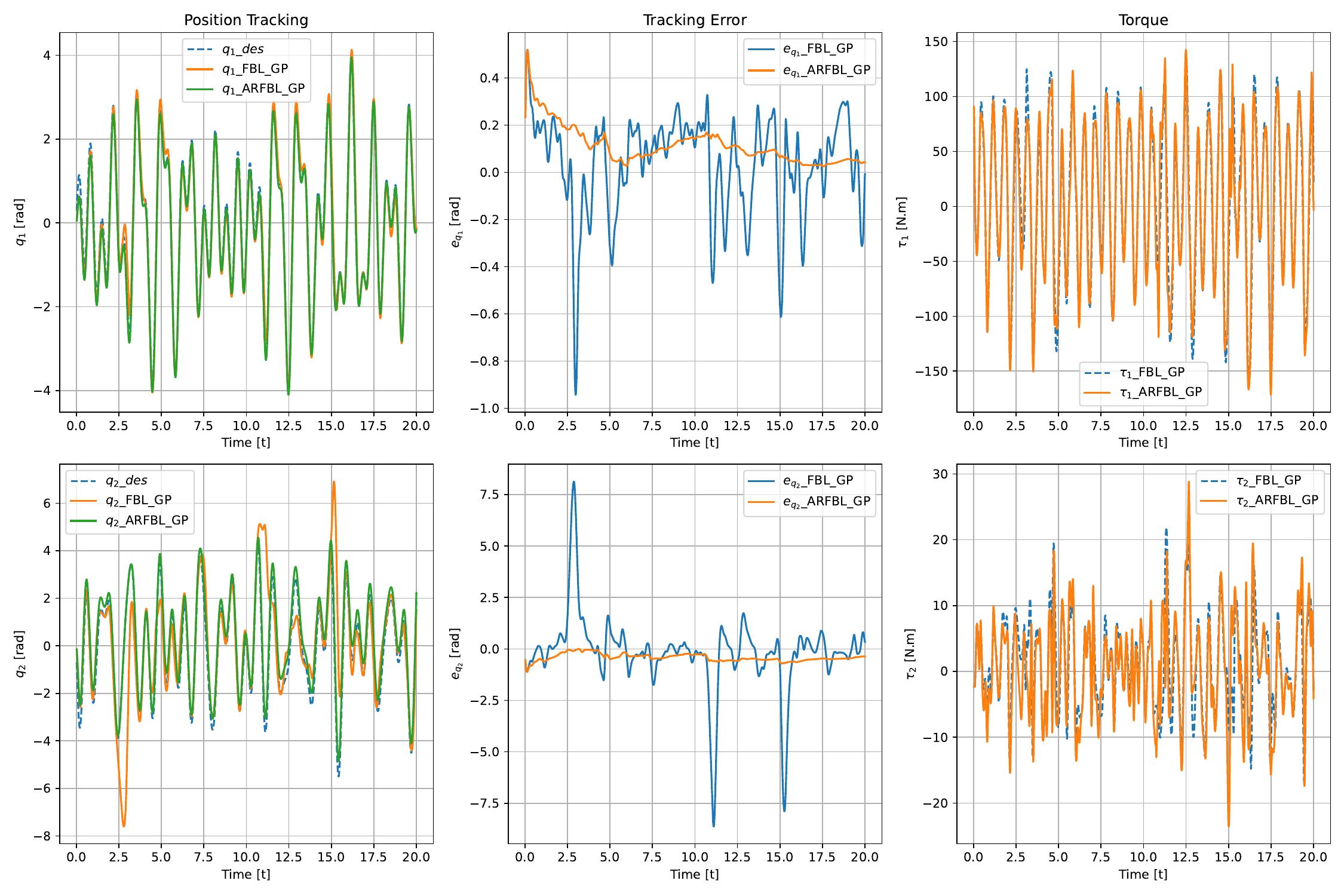}
    \caption{ Feedback linearization Vs. Adaptive robust feedback linearization using GP estimated dynamics}
    \label{fig:GP_experiment}
\end{figure*}

\begin{figure}[ht]
    \centering
    \includegraphics[width=\columnwidth]{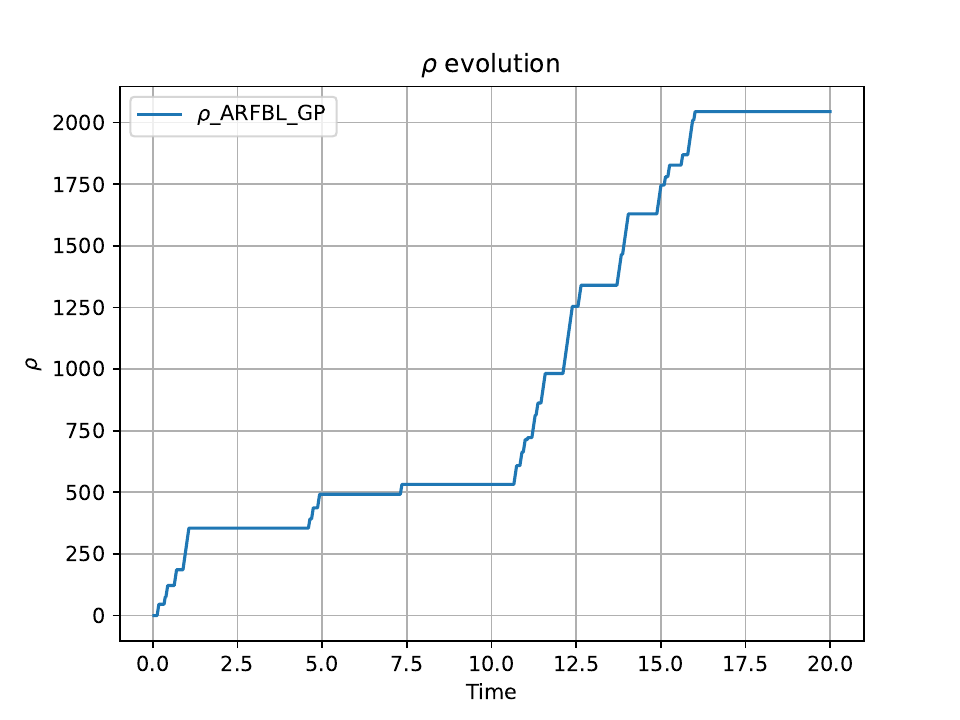}
    \caption{$\rho$ evolution using ARFBL when modeling dynamics using GPR }
    \label{fig:rho_evolution_GP}
\end{figure}
As expected, achieving only a bounded error is attributed to two primary reasons. Firstly, the use of estimated dynamics leads to only partial cancellation of the nonlinearities. Consequently, A common remedial strategy involves augmenting the Proportional-Derivative (PD) control gains, thereby enabling the linear control component to mitigate the effects of the un-canceled nonlinear term. Our proposed methods mimic a similar corrective measure adaptively, eliminating the necessity for manual gain adjustment. Remarkably, even in the presence of low gains, the update laws naturally prompt an increase in $\rho$,  effectively mirroring the impact of heightened PD gains. This adjustment comes with the added benefits of generating smoother and lower magnitude torque trajectories.

\section{Conclusions}\label{sec:conclusions}

In this paper, we introduce an adaptive robust feedback linearization scheme able
to compensate for model uncertainties without any a-priori knowledge on them, and we proved, under mild assumption, its convergence.
We experimentally evaluated the proposed method on a 2 DOF RR robot in simulation. Results show that i) compared to state-of-the-art uncertainty-aware methods, the proposed ARFBL strategy leads to comparable results at steady state, without requiring any a-priori knowledge on the model mismatch; ii) the proposed method provides satisfiable tracking performance, enabling the implementation of robust trajectory tracking when uncertainty bounds are not available.
As a future work, we plan to extend the update law to allow $\rho$ to decrease, in order to achieve a dynamics more similar to the one of the standard RFBL.





\bibliographystyle{IEEEtran}
\bibliography{ref}

\end{document}